\documentclass[runningheads]{llncs}

\usepackage[T1]{fontenc}
\usepackage{mathtools,amssymb}
\usepackage{hyperref}
\usepackage{tikz}
\usetikzlibrary{arrows.meta,positioning,fit}

\hypersetup{
  hidelinks,
  pdftitle={Graph Surgery and the Do-Operator},
  pdfauthor={Satpreet Makhija},
  pdfsubject={A precise correspondence for acyclic structural causal models},
  pdfkeywords={structural causal models, interventions, do-operator,
    graph surgery, dependency graphs}
}

\allowdisplaybreaks[1]

\DeclareMathOperator{\An}{An}
\DeclareMathOperator{\Run}{Run}
\DeclareMathOperator{\Surg}{Surg}
\DeclareMathOperator{\Do}{Do}
\DeclareMathOperator{\Graph}{Graph}
\DeclareMathOperator{\doexpr}{do}

\title{Graph Surgery and the Do-Operator}
\subtitle{A Precise Correspondence for Acyclic Structural Causal Models}
\titlerunning{Graph Surgery and the Do-Operator}
\author{Satpreet Makhija\inst{1}}
\authorrunning{S. Makhija}
\institute{Ashoka University, Sonipat, India\\
\email{satpreet.makhija@ashoka.edu.in}}

\begin{document}

\maketitle

\begin{abstract}
The \(\doexpr\)-operator is described graphically by deleting arrows into its
targets and functionally by replacing their mechanisms with constants.  To
call these operations equivalent is not yet a mathematical statement: one
returns a graph and remembers only the targets, whereas the other returns
mechanisms and also remembers the imposed values.  We make a dependency-level
comparison precise for deterministic acyclic structural causal models with
finitely many endogenous variables.  If \(\Graph(F)\) extracts the
dependencies of a mechanism family \(F\), our main theorem is
\[
  \Graph(F^\iota)
  =
  \Surg\bigl(\Graph(F),T_\iota\bigr).
\]
Thus replacing target mechanisms removes exactly the dependencies removed by
graph surgery.  For a model \(M=(G,F)\) whose graph may contain unused arrows,
we characterize when the same equality holds with \(G\) in place of
\(\Graph(F)\); it holds for every intervention exactly when \(G\) records the
dependencies of \(F\) exactly.  We then define the intervened model,
characterize its run, show how sequential interventions combine, and prove
that an outcome depends only on interventions at its actual dependency
ancestors.  All principal results are machine-checked in an accompanying
Lean~4 development.
\keywords{Structural causal models \and Interventions \and Do-operator \and
Graph surgery \and Dependency graphs.}
\end{abstract}

\section{Introduction}
\label{sec:introduction}

The \(\doexpr\)-operator is the standard notation for an intervention in a
structural causal model~\cite{pearl2009causality}.  An expression such as
\(\doexpr(A=a)\) says that \(A\) is no longer computed by its ordinary mechanism:
it is supplied by the experimenter and held at \(a\).  A formal semantics
should identify the resulting model and explain how it behaves.

Zhang conjectured that deleting incoming arrows and replacing mechanisms by
constants are equivalent views of intervention~\cite[Sec.~6.4]{zhang2025thesis}.
The word ``equivalent'' needs to be made precise.  Graph surgery and mechanism
replacement are not literally the same operation: they act on different
objects and, more importantly, graph surgery cannot distinguish
\(\doexpr(A=0)\) from \(\doexpr(A=1)\).  A natural comparison is between the
dependencies that remain.  We formulate that comparison and prove it for
deterministic acyclic structural causal models with finitely many endogenous
variables.

Our starting point is a causal model \(M\coloneqq(G,F)\) with two components.  The
directed acyclic graph \(G\) describes which variables may directly affect
which others, and the mechanism family \(F\) says how their values are
computed.  Compatibility
requires every dependency used by \(F\) to occur as an arrow of \(G\); it does
not require every arrow to be used.  We call the model exact when the arrows of
\(G\) are precisely the dependencies of \(F\).  For an exogenous state \(u\),
\(\Run(M,u)\) is the unique world generated by the mechanisms.

We represent a simultaneous intervention by a type-respecting partial map
\(\iota\) on \(V\), whose domain \(T_\iota\) is its target set.
On the graph, \(\Surg(G,T_\iota)\) deletes the arrows entering the targets.
In the equations, \(F^\iota\) replaces the corresponding
mechanisms with the constants supplied by \(\iota\).  Our central result says
that the same dependency graph is reached in either order: update the
mechanisms and then extract their dependencies, or first extract the
dependencies and then perform graph surgery.  When the supplied graph \(G\)
is exact, this is also the graph obtained by performing surgery directly on
\(G\).

Our main results are:
\begin{enumerate}
  \item an exact correspondence between graph surgery and constant mechanism
        replacement (Theorem~\ref{thm:surgery-do});
  \item a characterization of when this correspondence agrees with the graph
        supplied as part of a causal model
        (Corollary~\ref{cor:supplied-graph});
  \item a law for combining sequential interventions, including the fact that
        a later assignment replaces an earlier one
        (Theorem~\ref{thm:composition}); and
  \item an ancestor theorem: the value of a set of variables depends only on
        the part of the intervention applied to their actual dependency
        ancestors
        (Theorem~\ref{thm:locality}).
\end{enumerate}

The principal results are also mechanized in Lean~4.  The accompanying
artifact checks the headline correspondence at its full generality, before
any finiteness or acyclicity assumptions are introduced, and then verifies
the model-level, evaluation, composition, and locality results.

\section{Structural causal models}
\label{sec:models}

Let \(V\) be a finite set of endogenous variables.  Each \(v\in V\) has a
nonempty value set \(\mathcal X_v\).  Let
\[
  \mathcal X\coloneqq\prod_{v\in V}\mathcal X_v
\]
be the set of complete assignments, which we call \emph{worlds}.  For
\(S\subseteq V\), write \(x_S\) for the restriction of a world
\(x\in\mathcal X\) to \(S\).  Let \(\mathcal U\) be a nonempty set of
exogenous states.  An element
\(u\in\mathcal U\) may collect local disturbances, shared background factors,
or any other information fixed outside the endogenous equations.  No
probability law is needed.

A mechanism family on \(V\) is a collection of functions
\[
  F_v:\mathcal U\times\mathcal X\longrightarrow\mathcal X_v
  \qquad(v\in V).
\]

We give every mechanism the common domain
\(\mathcal U\times\mathcal X\) deliberately.  This avoids building an
a priori parent set into the type of \(F_v\) and lets the dependencies be
recovered from the functions themselves: a coordinate that \(F_v\) ignores
contributes no arrow.  Compatibility will require every coordinate that can
affect \(F_v\) to be permitted by \(G\), while acyclicity will make the
resulting equations evaluable in topological order.

\begin{definition}[Dependency graph]
\label{def:dependency-graph}
A mechanism \(F_v\) \emph{depends on} \(w\in V\) if changing only \(w\) can
change the value returned by \(F_v\).  That is, there are
\(u\in\mathcal U\) and worlds \(x,y\in\mathcal X\) that agree at every
variable except possibly \(w\), but for which
\(F_v(u,x)\ne F_v(u,y)\).  The dependency graph of \(F\), written
\(\Graph(F)\), has vertex set \(V\) and an arrow \(w\to v\) exactly when
\(F_v\) depends on \(w\).
\end{definition}

Thus the graph records actual rather than merely permitted dependencies.

\begin{definition}[Causal model]
\label{def:model}
A deterministic acyclic structural causal model is a pair \(M\coloneqq(G,F)\), where
\(G\) is a directed acyclic graph and \(F\) is a mechanism family compatible
with it: every arrow of \(\Graph(F)\) is also an arrow of \(G\).  A variable
\(w\) is a parent of \(v\) when \(G\) has an arrow \(w\to v\).  The model is
\emph{exact} when \(G=\Graph(F)\).
\end{definition}

Compatibility gives the graph and mechanisms distinct roles.  The graph says
which dependencies are permitted; the mechanisms determine which of them are
used in a particular model.  For example, \(G\) may contain \(w\to v\) even
when \(F_v\) ignores \(w\); such a model is compatible but not exact.  Although
\(F_v\) is written with a whole world as input, compatibility ensures that its
value depends only on the parents of \(v\).

\begin{lemma}[Only parents matter]
\label{lem:parent-sufficiency}
If \(M=(G,F)\) is a causal model and two worlds \(x,y\in\mathcal X\) agree on
every parent of \(v\), then
\[
  F_v(u,x)=F_v(u,y)
\]
for every \(u\in\mathcal U\).
\end{lemma}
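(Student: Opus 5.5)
The plan is to change \(x\) into \(y\) one coordinate at a time, using the finiteness of \(V\), and to use compatibility to show that no single step changes the value of \(F_v\). Fix \(u\in\mathcal U\) and let \(D\coloneqq\{w\in V : x_w\ne y_w\}\). Because \(x\) and \(y\) agree on every parent of \(v\), no element of \(D\) is a parent of \(v\). Since \(V\) is finite, \(D\) is finite, and we enumerate it as \(w_1,\dots,w_k\).

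Next we build a chain of worlds \(z^0,z^1,\dots,z^k\). Set \(z^0\coloneqq x\), and let \(z^i\) agree with \(y\) on \(w_1,\dots,w_i\) and with \(x\) everywhere else. Then \(z^k=y\), because \(x\) and \(y\) already agree outside \(D\). Consecutive worlds \(z^{i-1}\) and \(z^i\) agree at every variable except possibly \(w_i\). All \(z^i\) are legitimate elements of \(\mathcal X\), since \(\mathcal X\) is a full product of coordinate sets.

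We then show that each step preserves the value, arguing by contradiction. Suppose \(F_v(u,z^{i-1})\ne F_v(u,z^i)\). The pair \(z^{i-1},z^i\) witnesses that \(F_v\) depends on \(w_i\) in the sense of Definition~\ref{def:dependency-graph}, so \(w_i\to v\) is an arrow of \(\Graph(F)\). By compatibility (Definition~\ref{def:model}), \(w_i\to v\) is then an arrow of \(G\), so \(w_i\) is a parent of \(v\). This contradicts \(w_i\in D\). Hence \(F_v(u,z^{i-1})=F_v(u,z^i)\) for every \(i\), and chaining these equalities gives \(F_v(u,x)=F_v(u,y)\).

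The only real obstacle is the bookkeeping for the chain. Formally, the cleanest route is induction on the cardinality of \(D\), or equivalently induction over a finset of differing coordinates, updating one coordinate per step with a function-update operation. The induction should be set up so that the hypothesis is still ``agree on all parents of \(v\),'' which every intermediate world inherits. Finiteness is essential here. For infinite \(V\), a mechanism could depend on the tail of the world jointly without depending on any single coordinate, so the argument would fail.
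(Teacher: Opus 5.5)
Your proof is correct and follows the paper's own argument. You change \(x\) into \(y\) one coordinate at a time over the finitely many differing variables, none of which is a parent of \(v\), and use compatibility together with Definition~\ref{def:dependency-graph} to show that no single step changes \(F_v(u,\cdot)\); your explicit chain \(z^0,\dots,z^k\) and the contradiction step just spell out bookkeeping that the paper leaves implicit.
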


\begin{proof}
The set \(V\) is finite.  Pass from \(x\) to \(y\) by changing one variable at
a time.  Only nonparents need to be changed.  If \(w\) is not a parent of
\(v\), compatibility implies that \(w\to v\) is not an arrow of
\(\Graph(F)\).  By Definition~\ref{def:dependency-graph}, changing \(w\) alone
therefore leaves the value returned by \(F_v\) unchanged.
\end{proof}

\begin{lemma}[Unique evaluation]
\label{lem:unique-evaluation}
For every causal model \(M=(G,F)\) and exogenous state \(u\in\mathcal U\),
there is exactly one world \(x\in\mathcal X\) satisfying
\[
  x_v=F_v(u,x)
  \qquad(v\in V).
\]
\end{lemma}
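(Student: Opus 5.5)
We prove existence and uniqueness together by constructing the solution along a topological order of \(G\) and showing that any solution must coincide with it. Since \(G\) is a finite directed acyclic graph, we fix an enumeration \(v_1,\dots,v_n\) of \(V\) such that every parent of \(v_k\) occurs among \(v_1,\dots,v_{k-1}\). Fix \(u\in\mathcal U\) throughout.

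\emph{Existence.} We define the coordinates \(x_{v_1},\dots,x_{v_n}\) recursively. Suppose \(x_{v_1},\dots,x_{v_{k-1}}\) have been chosen. Extend them to a full world \(\tilde x^{(k)}\in\mathcal X\) by choosing arbitrary values at \(v_k,\dots,v_n\); this is possible because every \(\mathcal X_v\) is nonempty. Then set
\[
  x_{v_k}\coloneqq F_{v_k}\bigl(u,\tilde x^{(k)}\bigr).
\]
By Lemma~\ref{lem:parent-sufficiency} this value does not depend on the arbitrary filler, since all parents of \(v_k\) lie among the already-chosen coordinates. Let \(x\) be the resulting world. The worlds \(x\) and \(\tilde x^{(k)}\) agree on \(v_1,\dots,v_{k-1}\), and hence on every parent of \(v_k\). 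A second application of Lemma~\ref{lem:parent-sufficiency} therefore gives \(F_{v_k}(u,x)=F_{v_k}(u,\tilde x^{(k)})=x_{v_k}\), so \(x\) satisfies every equation.

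\emph{Uniqueness.} Let \(x\) and \(y\) both satisfy \(x_v=F_v(u,x)\) and \(y_v=F_v(u,y)\) for all \(v\). We show \(x_{v_k}=y_{v_k}\) by strong induction on \(k\). Assume \(x\) and \(y\) agree on \(v_1,\dots,v_{k-1}\). Then they agree on every parent of \(v_k\), so Lemma~\ref{lem:parent-sufficiency} yields
\[
  x_{v_k}=F_{v_k}(u,x)=F_{v_k}(u,y)=y_{v_k}.
\]

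The only delicate point is the existence step. There we need the filler values at \(v_k,\dots,v_n\), which is where nonemptiness of the value sets is used. We also need independence from that filler, which is exactly the content of Lemma~\ref{lem:parent-sufficiency}. The topological order itself follows from the standard fact that a finite acyclic graph has a vertex with no incoming arrows: repeatedly remove such a vertex.
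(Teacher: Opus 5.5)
Your proof is correct and follows the paper's argument: a topological order, a fill-and-evaluate construction made well defined by Lemma~\ref{lem:parent-sufficiency}, and induction along the same order for uniqueness. You also spell out the second application of that lemma, which shows the completed world satisfies each equation, and you note where nonemptiness of the value sets is used; the paper leaves both points implicit.
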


\begin{proof}
Order the variables as \(v_1,\ldots,v_n\) so that every parent comes before its
children.  Construct a world in that order.  At step \(k\), fill the unassigned
coordinates arbitrarily and evaluate \(F_{v_k}\).  Its parents already have
values, so Lemma~\ref{lem:parent-sufficiency} makes the result independent of
those temporary choices.  Use it as \(x_{v_k}\).  The completed world satisfies
every equation.

If \(x\) and \(y\) both satisfy the equations, induction along the same order
shows that they agree.  Indeed, they agree at the parents of \(v_k\), so
Lemma~\ref{lem:parent-sufficiency} gives
\(F_{v_k}(u,x)=F_{v_k}(u,y)\), and hence \(x_{v_k}=y_{v_k}\).
\end{proof}

\begin{definition}[Run]
\label{def:run}
For a model \(M=(G,F)\) and exogenous state \(u\in\mathcal U\), define
\(\Run(M,u)\) to be the unique world \(x\in\mathcal X\) such that
\[
  x_v=F_v(u,x)
  \qquad\text{for every }v\in V.
\]
\end{definition}

\section{The \texorpdfstring{\(\doexpr\)}{do}-operator}
\label{sec:intervention}

An intervention supplies values at some variables and leaves all others
unspecified.  We represent it by listing values only for the variables it
targets.

\begin{definition}[Intervention]
\label{def:intervention}
An intervention \(\iota\) is a type-respecting partial map on \(V\).  Its
domain \(T_\iota\coloneqq\operatorname{dom}(\iota)\subseteq V\) is the target
set, and \(\iota(v)\in\mathcal X_v\) for every \(v\in T_\iota\).  We write
\(\iota_v\coloneqq\iota(v)\).  The empty intervention is the empty map, written
\(\varnothing\).  For \(S\subseteq V\), \(\iota|_S\) denotes the ordinary
restriction of \(\iota\) to \(S\).
When \(T_\iota=\{A\}\) and \(\iota_A=a\), we write the intervention as
\(\doexpr(A=a)\).
\end{definition}

\begin{definition}[Graph surgery]
\label{def:surgery}
For a directed graph \(G\) on \(V\) and a target set \(T\subseteq V\),
\(\Surg(G,T)\) has the same vertices as \(G\) and deletes exactly the arrows
entering vertices in \(T\).
\end{definition}

Graph surgery depends only on the target set and does not record the values
imposed there.  Those values enter through the following change to the
mechanisms.

\begin{definition}[Mechanism replacement]
\label{def:replacement}
For a mechanism family \(F\) on \(V\) and intervention \(\iota\), define
\(F^\iota\) by
\[
  F^\iota_v(u,x)\coloneqq
  \begin{cases}
    \iota_v, & v\in T_\iota,\\
    F_v(u,x), & v\notin T_\iota.
  \end{cases}
\]
\end{definition}

\begin{example}[A three-variable intervention]
\label{ex:basic}
Let \(V\coloneqq\{A,B,C\}\), let every endogenous value set be
\(\mathbb R\), and let \(\mathcal U\coloneqq\mathbb R^2\).  For
\(u=(u_A,u_B)\), define
\[
  F_A(u,x)\coloneqq u_A,
  \qquad
  F_B(u,x)\coloneqq x_A+u_B,
  \qquad
  F_C(u,x)\coloneqq 2x_B.
\]
Thus \(\Graph(F)\) is the chain
\[
  A\longrightarrow B\longrightarrow C.
\]
If \(\iota\coloneqq\doexpr(B=7)\), then \(F^\iota_B\) is the constant \(7\), while
the other two mechanisms are unchanged.  Hence \(\Graph(F^\iota)\) consists
only of \(B\to C\), exactly the graph obtained by deleting the arrow entering
\(B\).  The updated equations have the unique solution
\[
  (A,B,C)=(u_A,7,14).
\]
\end{example}

\begin{theorem}[Graph--mechanism correspondence]
\label{thm:surgery-do}
For every mechanism family \(F\) on \(V\) and intervention \(\iota\),
\[
  \Graph(F^\iota)
  =
  \Surg\bigl(\Graph(F),T_\iota\bigr).
\]
\end{theorem}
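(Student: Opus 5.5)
Both sides have vertex set \(V\), so we compare arrows and show that for all \(w,v\in V\),
\[
  w\to v\in\Graph(F^\iota)
  \iff
  \bigl(v\notin T_\iota \text{ and } w\to v\in\Graph(F)\bigr).
\]
By Definition~\ref{def:surgery}, the right-hand side is exactly the arrow set of \(\Surg(\Graph(F),T_\iota)\).  The proof is a case split on whether the head \(v\) is a target.  It uses only Definitions~\ref{def:dependency-graph} and~\ref{def:replacement}.  No finiteness, acyclicity or compatibility is needed, which matches the remark that the correspondence holds at full generality.

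\emph{Case \(v\in T_\iota\).}  Here \(F^\iota_v(u,x)=\iota_v\) for all \(u,x\), so \(F^\iota_v\) is constant.  No pair of worlds can give \(F^\iota_v(u,x)\ne F^\iota_v(u,y)\), so \(F^\iota_v\) depends on no \(w\), and \(\Graph(F^\iota)\) has no arrow into \(v\).  Surgery deletes every arrow entering \(v\), so \(\Surg(\Graph(F),T_\iota)\) has none either.

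\emph{Case \(v\notin T_\iota\).}  Here \(F^\iota_v=F_v\) as functions on \(\mathcal U\times\mathcal X\).  The property ``\(F_v\) depends on \(w\)'' is a statement about the function \(F_v\) alone, namely the existence of \(u,x,y\) with \(x,y\) agreeing off \(w\) and different outputs.  So \(w\to v\in\Graph(F^\iota)\) iff \(w\to v\in\Graph(F)\).  Surgery keeps every arrow into a non-target, so this is also the condition for \(w\to v\in\Surg(\Graph(F),T_\iota)\).

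The argument has no real obstacle.  The only point needing care is that the dependency notion for each \(v\) is defined pointwise in \(F_v\) alone, not relative to the other mechanisms or to any run.  This is what lets the non-target case follow from equality of functions, rather than requiring an argument about how replaced mechanisms affect evaluation.  I would state this as a one-line observation before the case split, then conclude by extensionality of the arrow relation.
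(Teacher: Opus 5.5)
Your proposal is correct and follows essentially the same route as the paper: a case split on whether the head \(v\) is a target, with constancy of \(F^\iota_v\) handling targets and the equality \(F^\iota_v=F_v\) (together with the fact that dependence is a property of \(F_v\) alone) handling non-targets. Your remark that no finiteness, acyclicity or compatibility is used also matches the paper's note that the Lean statement holds at full generality.
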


\begin{proof}
Fix two variables \(w,v\in V\).  If \(v\) is a target, then
\(F^\iota_v\) is constant, so no arrow enters \(v\) in
\(\Graph(F^\iota)\); graph surgery likewise deletes every arrow entering
\(v\).  If \(v\) is not a target, then \(F^\iota_v=F_v\), so
\(w\to v\) is an arrow of \(\Graph(F^\iota)\) exactly when it is an arrow of
\(\Graph(F)\); graph surgery leaves all such arrows unchanged.  The two graphs
therefore have the same arrows.
\end{proof}

The identity compares the graphical and functional descriptions at the level
of dependencies.  A separate question is whether surgery on a graph \(G\)
supplied with the model gives that same graph.

\begin{corollary}[Agreement with the supplied graph]
\label{cor:supplied-graph}
Let \(M=(G,F)\) be a causal model and let \(\iota\) be an intervention.
Then \(F^\iota\) is compatible with \(\Surg(G,T_\iota)\).  Moreover,
\[
  \Graph(F^\iota)=\Surg(G,T_\iota)
\]
if and only if every arrow of \(G\) that is absent from \(\Graph(F)\) enters
an intervention target.  Consequently, the equality holds for every
intervention if and only if \(M\) is exact.
\end{corollary}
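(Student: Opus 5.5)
Everything reduces to Theorem~\ref{thm:surgery-do}, which lets us replace \(\Graph(F^\iota)\) by \(\Surg(\Graph(F),T_\iota)\) throughout. For compatibility, note that \(\Graph(F)\subseteq G\) as arrow sets, since \(M\) is a causal model. Surgery is monotone: deleting the arrows entering \(T_\iota\) from both sides preserves the inclusion. Hence
\[
  \Graph(F^\iota)=\Surg(\Graph(F),T_\iota)\subseteq\Surg(G,T_\iota).
\]
This is exactly compatibility of \(F^\iota\) with \(\Surg(G,T_\iota)\). I would also remark that \(\Surg(G,T_\iota)\) is acyclic, being a subgraph of the acyclic \(G\), so the intervened pair is again a causal model. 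This observation is not strictly required by the statement.

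For the equivalence, compare arrows. By the inclusion just proved, equality holds iff every arrow \(w\to v\) of \(\Surg(G,T_\iota)\) lies in \(\Surg(\Graph(F),T_\iota)\). An arrow of \(\Surg(G,T_\iota)\) is an arrow \(w\to v\) of \(G\) with \(v\notin T_\iota\). For such \(v\), surgery leaves the arrows into \(v\) untouched, so it lies in \(\Surg(\Graph(F),T_\iota)\) iff it lies in \(\Graph(F)\). Thus equality holds iff every arrow \(w\to v\) of \(G\) with \(v\notin T_\iota\) belongs to \(\Graph(F)\). Contrapositively: every arrow of \(G\) missing from \(\Graph(F)\) has its head in \(T_\iota\), i.e.\ enters a target.

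For the final claim, suppose first that \(M\) is exact. Then there are no arrows of \(G\) absent from \(\Graph(F)\), so the condition holds vacuously for every \(\iota\). Conversely, suppose equality holds for every intervention, and apply it to the empty intervention \(\varnothing\), whose target set is empty. Then no arrow enters a target, so the criterion forces every arrow of \(G\) to lie in \(\Graph(F)\). Combined with compatibility, this gives \(G=\Graph(F)\). Alternatively, one can observe directly that \(F^\varnothing=F\) and \(\Surg(G,\varnothing)=G\).

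There is no real obstacle here. The only point needing care is bookkeeping: graphs are compared by arrow sets on the common vertex set \(V\), and surgery acts arrow-wise depending only on the head. The converse direction uses the empty intervention rather than requiring a nonempty target, which is the step I would double-check against Definition~\ref{def:intervention}, which explicitly permits \(\varnothing\).
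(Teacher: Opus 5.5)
Your proposal is correct and follows the paper's proof essentially step for step. Both use monotonicity of surgery together with Theorem~\ref{thm:surgery-do} for compatibility, an arrow-wise comparison (an arrow survives surgery exactly when its head is not a target) for the criterion, and the empty intervention for the converse of the final claim.
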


\begin{proof}
Compatibility says that \(\Graph(F)\) is a subgraph of \(G\).  Applying the
same surgery to both graphs preserves this relation, and
Theorem~\ref{thm:surgery-do} identifies the smaller graph with
\(\Graph(F^\iota)\).  This proves compatibility after intervention.

The two surgically modified graphs are equal exactly when no extra arrow
survives.  An arrow survives surgery exactly when it does not enter a target.
Thus equality holds exactly when every arrow of \(G\) absent from
\(\Graph(F)\) enters a target.  The uniform statement follows: exact models
have no extra arrows, while equality for the empty intervention gives
\(\Graph(F)=G\).
\end{proof}

\begin{example}[Why the supplied graph may differ]
\label{ex:inexact}
Let \(V\coloneqq\{A,B\}\), let the only arrow of \(G\) be \(A\to B\), and suppose
neither mechanism depends on an endogenous variable.  Then \(\Graph(F)\) has
no arrows, so \((G,F)\) is compatible but not exact.  An intervention that
does not target \(B\) leaves \(A\to B\) in \(\Surg(G,T_\iota)\), whereas
\(\Graph(F^\iota)\) has no arrows.  If the intervention targets \(B\), both
graphs have no arrows and therefore agree.  Thus exactness is sufficient for
agreement under every intervention, but it is not necessary for agreement
under a particular one.
\end{example}

Having related the two separate edits, we now package their results as a
complete intervened model.

\begin{definition}[Do-operation]
\label{def:do}
For a causal model \(M=(G,F)\) and intervention \(\iota\), define
\[
  \Do(M,\iota)
  \coloneqq
  \bigl(\Surg(G,T_\iota),F^\iota\bigr).
\]
This pair is a causal model: deleting arrows preserves acyclicity, and
Corollary~\ref{cor:supplied-graph} supplies compatibility.
\end{definition}

The do-operation applies one edit to each component of the model.  An
alternative construction first replaces the mechanisms and then equips them
with their exact dependency graph, giving
\(\bigl(\Graph(F^\iota),F^\iota\bigr)\).

\begin{corollary}[Agreement of the two constructions]
\label{cor:two-routes}
For every causal model \(M=(G,F)\) and intervention \(\iota\), the pair
\(\bigl(\Graph(F^\iota),F^\iota\bigr)\) is an exact causal model.  For every
exogenous state \(u\),
\[
  \Run(\Do(M,\iota),u)
  =
  \Run\bigl((\Graph(F^\iota),F^\iota),u\bigr).
\]
If \(M\) is exact, the stronger model equality also holds:
\[
  \Do(M,\iota)
  =
  \bigl(\Graph(F^\iota),F^\iota\bigr).
\]
\end{corollary}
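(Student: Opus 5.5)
The plan has three parts: show that \(\bigl(\Graph(F^\iota),F^\iota\bigr)\) is an exact causal model, show that the two intervened models share the same run, and obtain the model equality in the exact case from Corollary~\ref{cor:supplied-graph}.

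\textbf{Exact causal model.} We must check three things. First, \(\Graph(F^\iota)\) is acyclic. Second, \(F^\iota\) is compatible with it. Third, the graph equals \(\Graph(F^\iota)\). The last two hold trivially, since the graph is \(\Graph(F^\iota)\) itself. For acyclicity, we argue as follows.
\begin{itemize}
  \item By Corollary~\ref{cor:supplied-graph}, \(F^\iota\) is compatible with \(\Surg(G,T_\iota)\), so \(\Graph(F^\iota)\) is a subgraph of \(\Surg(G,T_\iota)\).
  \item \(\Surg(G,T_\iota)\) is a subgraph of \(G\), which is acyclic.
  \item A subgraph of an acyclic graph is acyclic.
\end{itemize}
Alternatively, Theorem~\ref{thm:surgery-do} gives \(\Graph(F^\iota)=\Surg(\Graph(F),T_\iota)\subseteq\Graph(F)\subseteq G\), which yields the same conclusion.

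\textbf{Equality of runs.} Both \(\Do(M,\iota)\) and \(\bigl(\Graph(F^\iota),F^\iota\bigr)\) are causal models, by Definition~\ref{def:do} and the previous step. They have the same mechanism family \(F^\iota\). By Definition~\ref{def:run}, each run is the unique world \(x\) satisfying \(x_v=F^\iota_v(u,x)\) for all \(v\). The defining system of equations refers only to the mechanisms, not to the graph. Lemma~\ref{lem:unique-evaluation}, applied to either model, says this system has exactly one solution. So both runs are that same world.

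\textbf{Model equality in the exact case.} If \(M\) is exact, the final clause of Corollary~\ref{cor:supplied-graph} gives \(\Surg(G,T_\iota)=\Graph(F^\iota)\). The two pairs then agree in both components, so \(\Do(M,\iota)=\bigl(\Graph(F^\iota),F^\iota\bigr)\).

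The only step needing care is the acyclicity of \(\Graph(F^\iota)\); all other steps are bookkeeping. That step is handled by the subgraph inclusion, using the standard fact that any cycle in a subgraph would be a cycle in \(G\).
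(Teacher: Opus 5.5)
Your proposal is correct and follows the paper's proof essentially step for step. You get acyclicity of \(\Graph(F^\iota)\) from its inclusion in the acyclic graph \(\Surg(G,T_\iota)\) via Corollary~\ref{cor:supplied-graph}, equality of runs from the shared mechanism family \(F^\iota\), and the exact-case model equality from the final clause of Corollary~\ref{cor:supplied-graph}; your alternative acyclicity argument via Theorem~\ref{thm:surgery-do} is also valid.
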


\begin{proof}
Corollary~\ref{cor:supplied-graph} says that \(\Graph(F^\iota)\) is a subgraph
of the acyclic graph \(\Surg(G,T_\iota)\), so it is acyclic.  The displayed
pair is therefore a causal model, and it is exact by construction.  Both
models use the mechanism family \(F^\iota\), which determines their runs.  If
\(M\) is exact, Corollary~\ref{cor:supplied-graph} also gives
\(\Surg(G,T_\iota)=\Graph(F^\iota)\), so the graphs, and hence the models, are
equal.
\end{proof}

\begin{figure}[t]
\centering
\begin{tikzpicture}[
  every node/.style={font=\small},
  map/.style={-{Latex[length=1.8mm]},line width=0.55pt},
  lab/.style={font=\scriptsize,fill=white,inner sep=2pt,text=black}
]
  \node (f)  at (0,0)       {\(F\)};
  \node (fi) at (5.2,0)     {\(F^\iota\)};
  \node (g)  at (0,-1.8)    {\(\Graph(F)\)};
  \node (gi) at (5.2,-1.8)  {\(\Graph(F^\iota)\)};

  \draw[map] (f)--node[lab,above]{replace target mechanisms} (fi);
  \draw[map] (f)--node[lab,left]{extract dependencies} (g);
  \draw[map] (fi)--node[lab,right]{extract dependencies} (gi);
  \draw[map] (g)--node[lab,below]{surgery at \(T_\iota\)} (gi);
\end{tikzpicture}
\caption{The correspondence of Theorem~\ref{thm:surgery-do}.  Replacing target
mechanisms and extracting dependencies yields the same graph as extracting
dependencies first and then performing surgery.}
\label{fig:basic}
\end{figure}

The intervened outcome is characterized directly by the following equations.

\begin{corollary}[Intervention equation]
\label{cor:intervention-equation}
For every model \(M=(G,F)\), intervention \(\iota\), exogenous state
\(u\in\mathcal U\), and world \(x\in\mathcal X\),
\[
  x=\Run(\Do(M,\iota),u)
  \quad\Longleftrightarrow\quad
  \begin{cases}
    x_v=\iota_v, & v\in T_\iota,\\
    x_v=F_v(u,x), & v\notin T_\iota.
  \end{cases}
\]
\end{corollary}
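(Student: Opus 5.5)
The plan is to unfold the definitions and reduce everything to Lemma~\ref{lem:unique-evaluation} and Definition~\ref{def:run} applied to the intervened model. By Definition~\ref{def:do}, \(\Do(M,\iota)=(\Surg(G,T_\iota),F^\iota)\) is a causal model. Definition~\ref{def:run} therefore says that \(\Run(\Do(M,\iota),u)\) is the unique world \(x\) satisfying \(x_v=F^\iota_v(u,x)\) for every \(v\in V\). Lemma~\ref{lem:unique-evaluation} guarantees that this world exists and is unique. This gives the equivalence
\[
  x=\Run(\Do(M,\iota),u)
  \quad\Longleftrightarrow\quad
  x_v=F^\iota_v(u,x)\ \text{for all } v\in V .
\]
For the forward direction, the equations hold by definition of the run. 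For the backward direction, any world satisfying the equations equals the run by the uniqueness clause of Lemma~\ref{lem:unique-evaluation}.

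The remaining step is to rewrite the system \(x_v=F^\iota_v(u,x)\) by cases using Definition~\ref{def:replacement}. For \(v\in T_\iota\), we have \(F^\iota_v(u,x)=\iota_v\), so the equation reads \(x_v=\iota_v\). For \(v\notin T_\iota\), we have \(F^\iota_v(u,x)=F_v(u,x)\), so the equation reads \(x_v=F_v(u,x)\). The conjunction over all \(v\) is therefore exactly the displayed case system, and chaining the two equivalences proves the corollary.

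I do not expect a real obstacle. The only point needing care is that uniqueness is applied to the intervened model, not to \(M\). This requires \(\Do(M,\iota)\) to be a genuine causal model: \(\Surg(G,T_\iota)\) must be acyclic and \(F^\iota\) compatible with it. Acyclicity holds because surgery only deletes arrows, and compatibility is provided by Corollary~\ref{cor:supplied-graph}; both are already recorded in Definition~\ref{def:do}, so the cited lemma applies directly.
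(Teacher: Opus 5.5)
Your proposal is correct and matches the paper's proof: unfold Definition~\ref{def:run} for \(\Do(M,\iota)\) to get \(x_v=F^\iota_v(u,x)\) for all \(v\), then split on whether \(v\in T_\iota\) using Definition~\ref{def:replacement}. Your explicit appeal to the uniqueness clause of Lemma~\ref{lem:unique-evaluation}, and your check that \(\Do(M,\iota)\) is a genuine causal model, spell out what the paper leaves implicit.
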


\begin{proof}
By Definition~\ref{def:run}, \(x\) is the run exactly when
\(x_v=F^\iota_v(u,x)\) for every \(v\).  At a target, this says
\(x_v=\iota_v\); at every other variable, it says \(x_v=F_v(u,x)\).
These are precisely the two displayed conditions.
\end{proof}

The corollary gives the complete test for an intervened outcome: each target
has its assigned value, and every equation outside the target set remains
unchanged.  Because \(G\) is acyclic, these conditions determine exactly one
world.

\section{Sequential interventions}
\label{sec:composition}

Suppose \(\iota\) is applied first and \(\kappa\) second.  If both assign a
value to the same variable, the later value from \(\kappa\) must prevail.

\begin{theorem}[Sequential interventions]
\label{thm:composition}
For interventions \(\iota\) and \(\kappa\), define \(\lambda\) to be the partial
map with domain \(T_\iota\cup T_\kappa\) that agrees with \(\kappa\) wherever
\(\kappa\) is defined and otherwise with \(\iota\).  Then, for every model
\(M\),
\[
  \Do\bigl(\Do(M,\iota),\kappa\bigr)
  =
  \Do(M,\lambda).
\]
\end{theorem}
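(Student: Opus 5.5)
Since a model is the pair \((G,F)\), I will prove the equality by comparing the two components separately. Unfolding Definition~\ref{def:do} twice gives
\[
  \Do\bigl(\Do(M,\iota),\kappa\bigr)
  =
  \Bigl(\Surg\bigl(\Surg(G,T_\iota),T_\kappa\bigr),\,(F^\iota)^\kappa\Bigr),
  \qquad
  \Do(M,\lambda)=\bigl(\Surg(G,T_\lambda),F^\lambda\bigr),
\]
with \(T_\lambda=T_\iota\cup T_\kappa\). It therefore suffices to prove two equalities: \(\Surg(\Surg(G,T_\iota),T_\kappa)=\Surg(G,T_\iota\cup T_\kappa)\) and \((F^\iota)^\kappa=F^\lambda\).

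For the graphs, I will argue arrow by arrow using Definition~\ref{def:surgery}. An arrow \(w\to v\) of \(G\) survives the first surgery exactly when \(v\notin T_\iota\). It then survives the second exactly when, in addition, \(v\notin T_\kappa\). So it survives both exactly when \(v\notin T_\iota\cup T_\kappa\), which is the criterion for surviving \(\Surg(G,T_\lambda)\). Both graphs have vertex set \(V\), so they coincide.

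For the mechanisms, I will use function extensionality and fix \(v\), \(u\), and \(x\), splitting into three cases by Definition~\ref{def:replacement}.
\begin{itemize}
  \item If \(v\in T_\kappa\), then \((F^\iota)^\kappa_v(u,x)=\kappa_v=\lambda_v\).
  \item If \(v\notin T_\kappa\) but \(v\in T_\iota\), then \((F^\iota)^\kappa_v(u,x)=F^\iota_v(u,x)=\iota_v=\lambda_v\).
  \item If \(v\) lies in neither set, then \(v\notin T_\lambda\), and both sides equal \(F_v(u,x)\).
\end{itemize}
In the first two cases \(v\in T_\lambda\), so \(F^\lambda_v(u,x)=\lambda_v\) and the two sides agree. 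The first case is exactly where the later assignment from \(\kappa\) overrides the earlier one from \(\iota\).

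The only point I expect to need care is formal rather than mathematical: model equality must be read as equality of the pairs. The compatibility and acyclicity side conditions are propositions, so once the underlying pairs agree, proof irrelevance makes them agree as well. There is no genuine obstacle. Note that the argument never uses \(M\) itself beyond its components, so Corollary~\ref{cor:supplied-graph} and Theorem~\ref{thm:surgery-do} are not needed.
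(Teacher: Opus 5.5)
Your proposal is correct and follows essentially the same route as the paper: unfold the do-operation into its graph and mechanism components, observe that both graphs delete exactly the arrows entering \(T_\iota\cup T_\kappa\), and compare mechanisms in the three cases \(v\in T_\kappa\), \(v\in T_\iota\setminus T_\kappa\), and \(v\notin T_\iota\cup T_\kappa\). Your version spells out the arrow-by-arrow and pointwise details that the paper's proof only summarizes.
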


\begin{proof}
On both sides, the graph is obtained from \(G\) by deleting the arrows entering
every variable targeted by either intervention.  At a target of \(\kappa\),
the mechanism is the constant \(\kappa_v\).  At a target of \(\iota\) but not
\(\kappa\), the constant \(\iota_v\) remains.  Every other mechanism is
\(F_v\).  Thus both the graphs and the mechanism families are equal.
\end{proof}

\begin{corollary}[Basic intervention laws]
\label{cor:laws}
For every model \(M\) and interventions \(\iota,\kappa\):
\begin{enumerate}
  \item The empty intervention does nothing:
    \(\Do(M,\varnothing)=M\).
  \item Repeating an intervention changes nothing:
    \(\Do(\Do(M,\iota),\iota)=\Do(M,\iota)\).
  \item Interventions with no common target may be applied in either order:
    \[
      \Do(\Do(M,\iota),\kappa)=\Do(\Do(M,\kappa),\iota).
    \]
  \item If both interventions target \(v\), the later value wins: the mechanism
    at \(v\) in \(\Do(\Do(M,\iota),\kappa)\) is the constant \(\kappa_v\).
\end{enumerate}
\end{corollary}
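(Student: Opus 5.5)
Each of the four laws is a direct instance of Theorem~\ref{thm:composition} or of the definitions, so I would handle them one at a time. Throughout, I regard models as equal when their graphs are equal and their mechanism families are equal componentwise, with mechanisms compared as functions (extensionality).

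For item~1, I unfold Definition~\ref{def:do} with \(T_\varnothing=\varnothing\). Surgery at the empty set deletes no arrows, so \(\Surg(G,\varnothing)=G\). In Definition~\ref{def:replacement} no variable is a target, so \(F^\varnothing_v=F_v\) for every \(v\). Hence \(\Do(M,\varnothing)=(G,F)=M\).

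For item~2, I apply Theorem~\ref{thm:composition} with \(\kappa=\iota\). The combined map \(\lambda\) has domain \(T_\iota\cup T_\iota=T_\iota\) and agrees with \(\iota\) wherever \(\iota\) is defined. So \(\lambda=\iota\) as partial maps, and the theorem gives the claim.

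For item~3, I apply Theorem~\ref{thm:composition} to both orders. This yields \(\lambda_{\iota,\kappa}\), which prefers \(\kappa\), and \(\lambda_{\kappa,\iota}\), which prefers \(\iota\). Both have domain \(T_\iota\cup T_\kappa\). Since \(T_\iota\cap T_\kappa=\varnothing\), each point of the domain lies in exactly one target set. There both maps return the unique value supplied, so the preference never matters and \(\lambda_{\iota,\kappa}=\lambda_{\kappa,\iota}\). The two sequential models are therefore both equal to \(\Do(M,\lambda)\).

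For item~4, I read off the mechanism at \(v\) directly from Definition~\ref{def:replacement} applied to the outer intervention. Since \(v\in T_\kappa\), we have \((F^\iota)^\kappa_v=\kappa_v\) regardless of \(F^\iota_v\). Alternatively, \(\lambda_v=\kappa_v\) in Theorem~\ref{thm:composition} gives the same conclusion.

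The only step needing any care is item~3. There the point is the equality of the two partial maps, and disjointness of the target sets is exactly what removes the asymmetry in the definition of \(\lambda\). The rest is routine unfolding.
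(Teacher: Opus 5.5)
Your proposal is correct and takes essentially the same route as the paper: item~1 by unfolding Definition~\ref{def:do} with \(\Surg(G,\varnothing)=G\) and \(F^\varnothing=F\), and items~2--4 by identifying the combined intervention \(\lambda\) of Theorem~\ref{thm:composition}; your direct reading of item~4 from Definition~\ref{def:replacement} is an equally valid shortcut. Your explicit check in item~3, that disjointness makes the two combined maps coincide, is the ``value retained at each target'' verification that the paper leaves to the reader.
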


\begin{proof}
The first statement follows directly from Definition~\ref{def:do}, since
surgery with an empty target set leaves \(G\) unchanged and
\(F^\varnothing=F\).  The remaining statements follow from
Theorem~\ref{thm:composition} by checking the value retained at each target.
\end{proof}

For three or more interventions, the placement of parentheses does not matter:
applying any finite sequence gives the same model as one intervention that
collects all targets and keeps the last value assigned to each one.  This is
equality of the resulting models, not merely of their runs.

\section{Dependence on ancestors}
\label{sec:locality}

For an outcome set \(O\subseteq V\), write \(\An_F(O)\) for the variables in
\(O\) and every variable from which a directed path in the actual dependency
graph \(\Graph(F)\) reaches a variable in \(O\).  If \(v\in\An_F(O)\), then
every parent of \(v\) in \(\Graph(F)\) also lies in \(\An_F(O)\).
Consequently, the equation for any variable in this set depends only on
variables in the same set.

\begin{figure}[htbp]
\centering
\begin{tikzpicture}[
  node distance=13mm and 14mm,
  var/.style={circle,draw=black,line width=0.55pt,minimum size=7mm,
    inner sep=0pt,font=\small},
  target/.style={var,double,double distance=0.7pt},
  outside/.style={var,draw=black!45,text=black!70},
  edge/.style={-{Latex[length=1.7mm]},line width=0.55pt},
  removed/.style={edge,draw=black!45,densely dashed},
  cone/.style={draw=black!60,densely dashed,line width=0.45pt,inner sep=5pt}
]
  \node[var] (p) {\(P\)};
  \node[var,below=of p] (b) {\(B\)};
  \node[var,right=of p,yshift=-6.5mm] (a) {\(A\)};
  \node[var,right=of a] (y) {\(Y\)};
  \node[target,right=of y] (d) {\(D\)};
  \node[outside,above=of d] (w) {\(W\)};

  \draw[edge] (p)--(a);
  \draw[edge] (b)--(a);
  \draw[edge] (a)--(y);
  \draw[removed] (y)--node[pos=0.72,fill=white,inner sep=0.5pt]
    {\(\scriptstyle\times\)} (d);
  \draw[removed] (w)--node[pos=0.72,fill=white,inner sep=0.5pt]
    {\(\scriptstyle\times\)} (d);

  \node[cone,fit=(p)(b)(a)(y),
        label={[font=\footnotesize]above:\(\An_F(\{Y\})\)}] {};
  \node[font=\footnotesize,below=3.5mm of d] {\(\doexpr(D=d)\)};
\end{tikzpicture}
\caption{The outcome at \(Y\) is determined inside its ancestor set.  The
dashed, crossed arrows are removed by \(\doexpr(D=d)\); intervening on the
downstream variable \(D\) cannot change \(Y\).}
\label{fig:locality}
\end{figure}

\begin{theorem}[Dependence on ancestors]
\label{thm:locality}
Let \(M=(G,F)\) be a causal model and let \(O\subseteq V\).  If two
interventions \(\iota\) and \(\kappa\) agree on \(\An_F(O)\), so that
\(\iota|_{\An_F(O)}=\kappa|_{\An_F(O)}\), then for every \(u\in\mathcal U\),
\[
  \Run(\Do(M,\iota),u)_O
  =
  \Run(\Do(M,\kappa),u)_O.
\]
\end{theorem}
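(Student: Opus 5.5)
Write \(A\coloneqq\An_F(O)\), \(x\coloneqq\Run(\Do(M,\iota),u)\) and \(y\coloneqq\Run(\Do(M,\kappa),u)\).  The plan is to show the stronger statement \(x_A=y_A\); restricting to \(O\subseteq A\) then gives the theorem.

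The argument is an induction along a topological order, as in the uniqueness half of Lemma~\ref{lem:unique-evaluation}.  Two features differ from that lemma.  First, the order must be taken on an acyclic graph containing \(\Graph(F)\), and \(G\) itself serves: compatibility puts every arrow of \(\Graph(F)\) into \(G\).  Second, parent sufficiency is invoked with respect to the \emph{actual} dependency graph \(\Graph(F)\) rather than \(G\).

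Concretely, fix a topological order \(v_1,\dots,v_n\) of \(G\).  Every arrow of \(\Graph(F)\) goes forward in this order.  Prove by strong induction on \(k\) that \(v_k\in A\) implies \(x_{v_k}=y_{v_k}\).  Use Corollary~\ref{cor:intervention-equation} to describe \(x\) and \(y\), and split into cases according to whether \(v_k\in A\) is targeted.

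\emph{Case 1: \(v_k\in T_\iota\).}  Since \(\iota\) and \(\kappa\) agree on \(A\), we have \(v_k\in T_\kappa\) with \(\kappa_{v_k}=\iota_{v_k}\), so \(x_{v_k}=\iota_{v_k}=\kappa_{v_k}=y_{v_k}\).

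\emph{Case 2: \(v_k\notin T_\iota\).}  Symmetrically \(v_k\notin T_\kappa\), so \(x_{v_k}=F_{v_k}(u,x)\) and \(y_{v_k}=F_{v_k}(u,y)\).  Every \(\Graph(F)\)-parent \(w\) of \(v_k\) lies in \(A\), as noted before the theorem, and precedes \(v_k\).  By the induction hypothesis, \(x_w=y_w\) for all such \(w\).

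To conclude Case~2, apply Lemma~\ref{lem:parent-sufficiency} to the exact model \((\Graph(F),F)\).  This is a causal model because \(\Graph(F)\) is a subgraph of \(G\), hence acyclic.  Alternatively, rerun that lemma's one-coordinate-at-a-time argument directly.  Either way, \(F_{v_k}(u,x)=F_{v_k}(u,y)\), so \(x_{v_k}=y_{v_k}\).

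The expected obstacle is exactly this last step: applying parent sufficiency with \(\Graph(F)\)-parents rather than \(G\)-parents.  Compatibility with \(G\) alone is insufficient, since \(G\) may contain unused arrows from outside \(A\) into \(A\).  Using \(\Graph(F)\) as the graph, which is legitimate as argued above, resolves it.

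The remaining points are routine bookkeeping: the closure of \(A\) under \(\Graph(F)\)-parents, and reading ``agree on \(A\)'' as equality of the restricted partial maps, including their domains.
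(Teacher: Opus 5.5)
Your proof is correct and follows the paper's argument: you induct along a topological order and split on whether \(v_k\) is targeted, using that \(\iota|_{\An_F(O)}=\kappa|_{\An_F(O)}\) forces equal targets and values inside \(\An_F(O)\), and you close with Lemma~\ref{lem:parent-sufficiency} applied to the exact model \((\Graph(F),F)\). The only cosmetic difference is that you use a topological order of \(G\) rather than of \(\Graph(F)\), which is equivalent because compatibility puts every arrow of \(\Graph(F)\) into \(G\).
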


\begin{proof}
Order the variables topologically in \(\Graph(F)\).  We show by induction along
this order that the two runs agree at every \(v\in\An_F(O)\).

If \(v\) is targeted by either intervention, then
the assumption says it is targeted by both with the same value.  The two runs
agree at \(v\).  Otherwise both runs use the original mechanism \(F_v\).  By
the definition of \(\An_F(O)\), every parent of \(v\) in \(\Graph(F)\) lies in
\(\An_F(O)\), and all such parents precede \(v\).  The induction hypothesis
gives equal values at these parents.  Since \((\Graph(F),F)\) is an exact
causal model, Lemma~\ref{lem:parent-sufficiency} applied to that model gives
equal values at \(v\).  Thus the runs agree throughout \(\An_F(O)\), and in
particular on \(O\subseteq\An_F(O)\).
\end{proof}

\begin{corollary}[Discarding targets outside the ancestors]
\label{cor:restriction}
For every model \(M=(G,F)\), outcome set \(O\subseteq V\), intervention
\(\iota\), and exogenous state \(u\in\mathcal U\),
\[
  \Run(\Do(M,\iota),u)_O
  =
  \Run\bigl(\Do(M,\iota|_{\An_F(O)}),u\bigr)_O.
\]
In particular, if \(\iota\) has no target in \(\An_F(O)\), then
\[
  \Run(\Do(M,\iota),u)_O=\Run(M,u)_O.
\]
\end{corollary}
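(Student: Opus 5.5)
The plan is to obtain Corollary~\ref{cor:restriction} directly from Theorem~\ref{thm:locality}, applied with \(\kappa\coloneqq\iota|_{\An_F(O)}\). Everything reduces to checking that \(\iota\) and this \(\kappa\) satisfy the theorem's hypothesis, that is,
\[
  \iota|_{\An_F(O)}=\kappa|_{\An_F(O)}.
\]

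First, I would verify this identity of partial maps. By definition, \(\kappa\) has domain \(T_\iota\cap\An_F(O)\) and agrees with \(\iota\) there. Restricting \(\kappa\) again to \(\An_F(O)\) changes nothing, because restriction is idempotent: \((\iota|_S)|_S=\iota|_S\) for every \(S\subseteq V\). So both sides of the identity are the same partial map, with the same domain and the same values. Theorem~\ref{thm:locality} then gives the first displayed equation for every \(u\in\mathcal U\).

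For the special case, suppose \(\iota\) has no target in \(\An_F(O)\). Then \(T_\iota\cap\An_F(O)=\varnothing\), so \(\iota|_{\An_F(O)}\) is the empty intervention \(\varnothing\). Substituting into the first equation gives \(\Run(\Do(M,\iota),u)_O=\Run(\Do(M,\varnothing),u)_O\). By Corollary~\ref{cor:laws}(1), \(\Do(M,\varnothing)=M\), which yields \(\Run(M,u)_O\).

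I do not expect a genuine obstacle. The only point needing care is bookkeeping about partial maps: the restriction must be read as having domain \(T_\iota\cap\An_F(O)\) rather than \(\An_F(O)\), and an empty domain must be identified with the empty intervention, so that Corollary~\ref{cor:laws}(1) applies literally.
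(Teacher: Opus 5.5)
Your proposal is correct and follows essentially the same route as the paper. It applies Theorem~\ref{thm:locality} with \(\kappa=\iota|_{\An_F(O)}\), noting that restriction is idempotent, and then identifies the empty restriction with \(\varnothing\) so that the first law of Corollary~\ref{cor:laws} gives the special case.
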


\begin{proof}
The interventions \(\iota\) and \(\iota|_{\An_F(O)}\) agree on \(\An_F(O)\).
Apply Theorem~\ref{thm:locality}.  If \(\iota\) has no target in
\(\An_F(O)\), then \(\iota|_{\An_F(O)}=\varnothing\), and the second equality
follows from the first law in Corollary~\ref{cor:laws}.
\end{proof}

Thus, to determine the outcome on \(O\), one may discard all intervention
targets outside \(\An_F(O)\).

\paragraph{The condition is sufficient, not necessary.}
Cancellation can make interventions on an actual ancestor irrelevant to an
outcome.  Take \(V=\{A,B,Y\}\) and
\(\mathcal U=\mathcal X_A=\mathcal X_B=\mathcal X_Y=\{0,1\}\), and define
\(F_A(u,x)=u\), \(F_B(u,x)=x_A\), and
\(F_Y(u,x)=x_A\mathbin{\oplus}x_B\).  The model
\((\Graph(F),F)\) is exact and \(A\) is an ancestor of \(Y\), but under
\(\doexpr(A=a)\) we have \(B=a\) and hence \(Y=a\oplus a=0\).  Thus
\(\doexpr(A=0)\) and \(\doexpr(A=1)\) disagree on an ancestor while producing
the same value of \(Y\).

\section{Lean mechanization}
\label{sec:mechanization}

The principal results of this paper have been mechanized in Lean~4
\cite{demoura2021lean4}.  The central graph--mechanism correspondence is the
declaration \path{dependencyGraph_replace}.  Its formal statement requires
neither a finite variable type nor acyclicity: those assumptions enter only in
the subsequent causal-model and evaluation layers.  The artifact additionally
checks compatibility after intervention, the characterization for supplied
graphs, uniqueness and equations for runs, sequential composition, agreement
of the two intervention routes, and actual-ancestor locality.

The development pins the Lean compiler and every external dependency.  Its
verification script rebuilds the formalization, prints the axiom footprint of
each headline theorem, and rejects proof placeholders or custom axioms in the
do-semantics source.  The complete source, theorem-to-code cross-reference,
and reproducibility instructions are available at
\url{https://github.com/SatpreetMakhija/graph-surgery-do-operator}.

\section{Related work}
\label{sec:related}

The \(\doexpr\)-operator is central to Pearl's account of structural causal
models~\cite{pearl2009causality}.  In the usual structural-equation account,
an intervention replaces selected equations by assigned values; deleting
their incoming arrows is the corresponding graphical operation.  Peters,
Janzing, and Sch{\"o}lkopf give a modern treatment of functional models,
interventions, and causal graphs~\cite{peters2017elements}, while Halpern gives
axioms for reasoning with interventions in recursive
models~\cite{halpern2000axiomatizing}.

String-diagram accounts give explicit graphical languages for causal models.
Jacobs, Kissinger, and Zanasi make the syntax--semantics separation formal:
string diagrams are interpreted as stochastic matrices, and intervention is
an operation on the diagrammatic syntax~\cite{jacobs2021surgery}.  Lorenz and
Tull develop the approach for a broader class of causal
models~\cite{lorenz2023causal}.  Our result addresses a different, narrower
question for ordinary deterministic structural causal models: how extensional
dependencies change under constant mechanism replacement, including when a
graph supplied with the mechanisms contains unused arrows.

Recent work uses mechanism functions to explain \(d\)-separation
semantically~\cite{zhang2026semantic}.  Zhang's earlier thesis describes
incoming-edge deletion as the syntactic view of an intervention, constant
replacement as its semantic view, and conjectures that the two are
equivalent, with a formal proof in Coq proposed as the next
step~\cite[Sec.~6.4]{zhang2025thesis}.  Theorem~\ref{thm:surgery-do} gives the
conjecture a dependency-level formulation suitable for mechanization.

The same formulation distinguishes the dependency graph from a compatible
supplied graph, which may contain unused arrows.
Corollary~\ref{cor:supplied-graph} characterizes exactly when surgery on the
two graphs agrees.

\section{Conclusion}
\label{sec:conclusion}

Graph surgery and constant mechanism replacement agree after dependency
extraction:
\[
  \Graph(F^\iota)
  =
  \Surg\bigl(\Graph(F),T_\iota\bigr).
\]
For a graph supplied with the model, the analogous equality holds exactly
when surgery removes all of its unused arrows; it therefore holds for every
intervention if and only if the starting model is exact.  The construction
based on the supplied graph and the one based on the exact dependency graph
always have the same run; for an exact starting model, they are the same
intervened model.

The intervention equation characterizes the resulting outcome.  Sequential
interventions combine by retaining the last value assigned to each target, and
an outcome depends only on interventions at its actual dependency ancestors.
Together these results specify the \(\doexpr\)-operator directly while keeping
its structural and functional components distinct.

\subsubsection*{Acknowledgements.}
This work was supported by the Anusandhan National Research Foundation (ANRF),
Government of India, under the Prime Minister Early Career Research Grant
ANRF/ECRG/2025/001136/ENS.  I am grateful to Aalok Thakkar for helpful
discussions.

\bibliographystyle{splncs04}
\bibliography{references}

\end{document}